\pgfplotsset{compat=1.18}
    \theoremstyle{definition}
    \newtheorem{dfn}{Definition}[section]
    \newtheorem{lem}[dfn]{Lemma}
    \newtheorem{thm}[dfn]{Theorem}
\newif\iftaclinstructions
\newcommand{\instr}
\title{From Formal Language Theory to Statistical Learning:\\ Finite Observability of Subregular Languages}
\author{
  Katsuhiko Hayashi
  \\
  Language and Information Sciences \\
  The University of Tokyo 
  \\
  \texttt{{\small katsuhiko-hayashi@g.ecc.u-tokyo.ac.jp}}
  \And
  Hidetaka Kamigaito 
  \\
  Information Science \\
  Nara Institute of Science and Technology
  \\
  \texttt{{\small kamigaito.h@is.naist.jp}}
}
\date{}
\begin{document}
\maketitle
\begin{abstract}
We prove that all standard subregular language classes are linearly separable when represented by their deciding predicates. This establishes finite observability and guarantees learnability with simple linear models. Synthetic experiments confirm perfect separability under noise-free conditions, while real-data experiments on English morphology show that learned features align with well-known linguistic constraints. These results demonstrate that the subregular hierarchy provides a rigorous and interpretable foundation for modeling natural language structure. 
Our code used in real-data experiments is available at \url{https://github.com/{Anonymous}}.

\end{abstract}

\section{Introduction}
\label{sec:intro}

The relationship between formal language theory and natural language has long been a subject of investigation, dating back to the pioneering work of \citet{chomsky1963}, which connected context-free languages to algebraic structures.
This tradition has demonstrated that the formal properties of natural language are not arbitrary but are tightly constrained, often residing within computationally restricted subclasses of the regular languages.

In recent decades, research has increasingly focused on the \emph{subregular hierarchy}, a set of language classes strictly contained within the regular languages but capable of capturing a wide range of phonological, morphological, and even syntactic phenomena \citep{heinz2010,heinz2011tsl,graf2022subregular,graf2022diving,torres2023modeling,hanson2024strict}.
This line of work has revealed that linguistic constraints often fall within these low-complexity classes, explaining their learnability and robustness in acquisition \citep{heinz2011idsardi,saffran1996}.
For example, phonotactic restrictions such as vowel harmony, consonant dissimilation, and tonal patterns have been successfully modeled with subregular languages \citep{heinz2010,jardine2016}.
Morphological and syntactic dependencies have likewise been shown to align with subregular complexity \citep{chandlee2014}.

While the algebraic and logical characterizations of subregular classes are well established \citep{straubing2012,thomas1997,pin1996logic}, there has been limited work providing a \emph{geometric} perspective on why these classes matter for learnability.
At the same time, statistical learning theory has established powerful results connecting \emph{linear separability} to generalization guarantees via the VC dimension, perceptron mistake bounds, and margin-based analyses \citep{vapnik1999overview,blumer1989,novikoff1962,cortes1995,shalev2014}.
These results suggest that if subregular classes can be shown to be linearly separable in appropriate finite feature spaces, then we can directly apply statistical learning theory to natural language phenomena.

In this paper, we introduce the notion of \emph{finite observability} and prove that all standard subregular classes are finitely observable, which entails that they are linearly separable under suitable embeddings.
This provides a novel \emph{geometric characterization} of subregular languages, complementing existing algebraic and automata-theoretic accounts \citep{mcnaughton1971,schutzenberger1965,eilenberg1974automata}.
By embedding strings into finite-dimensional Boolean spaces defined by primitive predicates, we unify disparate subregular classes within a single framework and open the door to direct application of statistical learning guarantees.

The contributions of this paper are threefold:
\begin{enumerate}
    \item We define \textbf{finite observability}, a property ensuring that membership in a language depends only on finitely many primitive observations (predicates).
    \item We prove that all standard subregular classes ($\mathsf{SL}$, $\mathsf{SP}$, $\mathsf{LT}$, $\mathsf{PT}$, $\mathsf{LTT}$, $\mathsf{TSL}$) are finitely observable (see Appendix~\ref{app:finite-observability-subregular}).
    \item We show that finite observability entails \textbf{linear separability} in Boolean feature spaces, thereby bridging formal language theory, statistical learning theory, and linguistic applications.
\end{enumerate}

This perspective enriches our theoretical understanding of why natural language patterns are learnable, while also suggesting practical methods for interpretable, efficient models of linguistic constraints.

\section{Related Work}
\label{sec:related}

\paragraph{Subregular hierarchy: historical and theoretical foundations.}
The subregular hierarchy has been established as a family of language classes that are both \emph{computationally weaker} than regular languages and yet \emph{linguistically expressive} enough to capture many phonological and morphological constraints.
The algebraic characterization of \emph{star-free languages} and their correspondence with first-order logic FO\mbox{[}$<$\mbox{]} laid much of the theoretical foundation \citep{mcnaughton1971,straubing2012,schutzenberger1965,eilenberg1974automata,thomas1997}.
In particular, Simon's characterization of \emph{Piecewise Testable $\mathsf{(PT)}$} languages \citep{simon1975} and the algebraic and logical perspectives on \emph{Locally Testable $\mathsf{(LT)}$} languages \citep{mcnaughton1971,straubing2012,thomas1997} illustrate how language membership can be determined by \emph{finite local observations}, thereby connecting naturally to $\mathsf{SL}_k$ and $\mathsf{SP}_k$.
Tier-based Strictly Local ($\mathsf{TSL}$) languages, which localize nonlocal dependencies via tier projection, have proven to be a powerful extension \citep{heinz2011tsl}.
Our contribution complements this tradition by providing a \emph{geometric} characterization of subregular classes via linear separability, offering an alternative perspective to classical algebraic and automata-theoretic accounts.

\if0
\paragraph{Linguistic evidence and applications (phonology, morphology, syntax).}
Subregular classes have been applied to model local and nonlocal constraints across phonology, morphology, and syntax.
In phonology, Strictly Local languages capture cluster restrictions and syllable-structure constraints \citep{rogers2013}, while Strictly Piecewise languages capture nonadjacent harmony processes \citep{heinz2010}.
$\mathsf{TSL}$ languages handle long-distance vowel harmony and other tier-based phenomena \citep{heinz2011tsl,jardine2016}, while tonal and assimilation/dissimilation processes have also been argued to fall within subregular subclasses \citep{grafmayer2018,avcu2017}.
In morphology, affix ordering restrictions and stem--affix co-occurrence patterns have been modeled using $\mathsf{SL}$, $\mathsf{LT}$, and $\mathsf{TSL}$ languages \citep{rogers2013,chandlee2014}.
In syntax, dependencies have been reanalyzed in terms of Strictly Local and Strictly Piecewise constraints \citep{graf2017}.
The present work connects these descriptive successes to \emph{interpretable linear models}, providing a theoretical justification for lightweight learning and inference modules grounded in subregular structure.
\fi

\paragraph{Learnability, grammatical inference, and statistical learning theory.}
The grammatical inference literature has long studied the identifiability of language classes, beginning with Gold's identification-in-the-limit framework \citep{gold1967}, Angluin's query learning paradigm \citep{angluin1987}, and consolidated in de~la~Higuera's monograph \citep{delahiguera2010}.
Early work showed that certain subclasses of regular languages (e.g., $k$-testable in the strict sense) admit efficient inference algorithms \citep{garcia1990,oncina1992}.
For subregular languages, numerous learnability results have been established for $\mathsf{SL}$, $\mathsf{SP}$, and $\mathsf{TSL}$ classes \citep{heinz2010,heinz2011tsl,jardine2016,chandlee2014}.
Meanwhile, statistical learning theory provides a different angle: linear separability yields strong guarantees via the VC dimension \citep{vapnik1999overview,blumer1989,shalev2014}, the perceptron mistake bound \citep{novikoff1962}, and SVM margin analyses \citep{cortes1995}.
Our contribution demonstrates that subregular classes are linearly separable in appropriate finite feature spaces, thereby allowing these guarantees to apply directly to natural language classes themselves. This unifies classical grammatical inference with modern statistical learning theory.

\if0
\paragraph{Connections between logic, algebra, and geometry.}
The logical and algebraic characterizations of regular languages~(FO\mbox{[}$<$\mbox{]}, monoids, and aperiodicity \citep{mcnaughton1971,straubing1994,schutzenberger1965,pin1995,thomas1997,bojanczyk2014}) have traditionally been studied in symbolic and automata-theoretic terms.
Recent work has refined the complexity and decidability results for locally testable languages, piecewise testable languages, and their relatives \citep{place2014,place2013}.
Our contribution provides a complementary \emph{geometric} perspective, recasting these classes as halfspaces in the Boolean hypercube via minterm embeddings.
Specifically, any class definable by a finite set of primitive predicates (substrings, subsequences, prefixes, thresholds) is linearly separable with constant margin, and the standard subregular families fall neatly into this framework; a bridge from algebra and logic to statistical learning and convex geometry.
\fi

\section{Method}
\label{sec:method}

This section introduces the two central notions used throughout the paper: \emph{predicates} and \emph{finite observability}, and develops our main theorem that finite observability entails linear separability under an explicit embedding.
We also explain the learning-theoretic consequences and provide practical guidance for using low-dimensional predicate features in applications.
All formal proofs are deferred to \S\ref{sec:appendix-proofs}.

\subsection{Predicates and Finite Observability}
\label{sec:predicates}

Let $\Sigma$ be a finite alphabet and let $\#$ be a boundary symbol not in $\Sigma$.
For $K\in\mathbb{N}$ and $x\in\Sigma^*$, we write $\tilde{x}=\#^{K}\,x\,\#^{K}$ for the boundary-extended string.
A \emph{predicate} is a boolean-valued function
\begin{equation}
  p:\Sigma^* \to \{0,1\},
\end{equation}
which detects a finite, well-defined property of $x$ (e.g., presence of a particular substring, subsequence, or a thresholded count).
We emphasize that predicates are the \emph{atoms of observation}: once we know which predicates hold of $x$, no additional information about $x$ is required for membership decisions in finitely observable classes.

\paragraph{Finite Observability.}
A language class $\mathcal{C}\subseteq\mathcal{P}(\Sigma^*)$ is \emph{finitely observable} if there exists a \emph{finite} set of predicates $P=\{p_1,\dots,p_n\}$ such that, for every $L\in\mathcal{C}$, there is a set $S_L\subseteq\{0,1\}^n$ with
\begin{equation}
  \label{eq:finite-observability}
  x\in L \;\Longleftrightarrow\; (p_1(x),\dots,p_n(x))\in S_L \qquad \forall x\in\Sigma^*.
\end{equation}
Thus membership in any language $L$ depends only on the \emph{truth vector} of finitely many primitive predicates.

\paragraph{Illustrative predicates (examples).}
Below we list the predicate types that will serve as building blocks for standard subregular families: Appendix~\ref{app:finite-observability-subregular} gives formal constructions and proofs.
\begin{itemize}
  \item \textbf{Substring predicates ($\mathsf{SL}_k$)}:
    for $g\in(\Sigma\cup\{\#\})^k$,
    $p_g(x)=\mathbf{1}[\,g\text{ occurs contiguously in }\tilde{x}\,]$.
    \emph{Example:} in an SL$_3$ language forbidding 3-gram \texttt{ngt}, the constraint is captured by requiring $p_{\texttt{ngt}}(x)=0$.
  \item \textbf{Subsequence predicates ($\mathsf{SP}_k$)}:
    for $h\in(\Sigma\cup\{\#\})^k$,
    $q_h(x)=\mathbf{1}[\,h\text{ occurs as a subsequence of }\tilde{x}\,]$.
    \emph{Example:} in SP$_2$, $q_{\texttt{a\_i}}(x)$ indicates whether \texttt{a} precedes \texttt{i}; forbidding such pairs encodes vowel-harmony style constraints.
  \item \textbf{Boundary predicates ($\mathsf{LT}_k$)}:
    for $u,v\in(\Sigma\cup\{\#\})^{k-1}$,
    $\pi_u(x)=\mathbf{1}[\,\mathrm{prefix}_{k-1}(\tilde{x})=u\,]$,
    $\sigma_v(x)=\mathbf{1}[\,\mathrm{suffix}_{k-1}(\tilde{x})=v\,]$.
  \item \textbf{Thresholded count predicates ($\mathsf{LTT}_{k,\boldsymbol\tau}$)}:
    for $g$ with $|g|\le k$ and $t\in\{1,\dots,\tau_g\}$,
    $c_{g,t}(x)=\mathbf{1}[\,\#\{g\text{ in }\tilde{x}\}\ge t\,]$.
  \item \textbf{Tiered predicates ($\mathsf{TSL}_k$)}:
    for a tier $T\subseteq\Sigma$ and $g\in (T\cup\{\#\})^k$,
    $s_{T,g}(x)=\mathbf{1}[\,g\text{ occurs in the projection }\pi_T(\tilde{x})\,]$,
    where $\pi_T$ erases symbols outside $T$.
\end{itemize}

\noindent
Despite their different linguistic motivations, these predicates are all \emph{finite} in number for fixed class parameters ($k$, thresholds, and the finite alphabet), and they suffice to decide membership across the corresponding subregular families.

\subsection{Minterm Linearization}
\label{sec:minterm}

Given predicates $P=\{p_1,\dots,p_n\}$, define the \emph{truth vector}
$r(x)=(p_1(x),\dots,p_n(x))\in\{0,1\}^n$.
For each minterm $a\in\{0,1\}^n$, define the \emph{minterm indicator}
\begin{equation}
  \label{eq:minterm}
  \mu_a(x) \;=\; \prod_{j=1}^n \big( p_j(x)^{a_j}\,(1-p_j(x))^{1-a_j} \big) \in \{0,1\}.
\end{equation}
By construction, $\mu_a(x)=1$ iff $r(x)=a$; otherwise $\mu_a(x)=0$.
Let the \emph{minterm feature map} be
\begin{equation}
  \Phi:\Sigma^*\to\{0,1\}^{2^n},\qquad \Phi(x) = \big(\mu_a(x)\big)_{a\in\{0,1\}^n}.
\end{equation}

\begin{lem}[Minterm Linearization]\label{lem:minterm}
For any $S\subseteq\{0,1\}^n$, there exist weights $w\in\mathbb{R}^{2^n}$ and bias $b\in\mathbb{R}$ such that
\begin{equation}
  \langle w,\Phi(x)\rangle + b > 0 \iff r(x)\in S
  \qquad\text{for all }x\in\Sigma^*.
\end{equation}
Moreover, the separating hyperplane attains functional margin at least $1/2$;
the geometric margin equals $1/(2\|w\|_2)$.
\end{lem}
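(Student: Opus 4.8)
The plan is to exploit the one-hot structure of the minterm feature map. First I would observe that, by the defining property $\mu_a(x)=1 \iff r(x)=a$, the vector $\Phi(x)\in\{0,1\}^{2^n}$ is exactly the standard basis vector indexed by $r(x)$; in particular it has a single nonzero coordinate, equal to $1$. Consequently, for any weight vector $w\in\mathbb{R}^{2^n}$ we have $\langle w,\Phi(x)\rangle = w_{r(x)}$, i.e.\ the linear functional simply reads off the coordinate of $w$ selected by the truth vector of $x$.

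Given this, the construction of the separator is immediate: set $b=0$ and define $w_a = \tfrac12$ for $a\in S$ and $w_a=-\tfrac12$ for $a\notin S$ (the values assigned to minterms $a$ not realized by any string are irrelevant, so this global definition is harmless). Then $\langle w,\Phi(x)\rangle + b = w_{r(x)}$ equals $+\tfrac12$ when $r(x)\in S$ and $-\tfrac12$ when $r(x)\notin S$, which establishes the equivalence $\langle w,\Phi(x)\rangle+b>0 \iff r(x)\in S$ for all $x\in\Sigma^*$; strictness of the inequality is automatic because the functional takes only the values $\pm\tfrac12$, never $0$.

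For the margin claims I would simply note that $|\langle w,\Phi(x)\rangle+b|=\tfrac12$ for every $x$, so the functional margin is exactly $\tfrac12$ (hence at least $\tfrac12$), and the geometric margin equals the functional margin divided by $\|w\|_2$, namely $1/(2\|w\|_2)$. There is essentially no obstacle here; the only point requiring a little care is not to assume that $r$ is surjective onto $\{0,1\}^n$, since the argument never needs this — the weights are defined on all of $\{0,1\}^n$ and the equivalence is only asserted for truth vectors actually attained by strings. (One could instead set $w_a=0$ for unrealized minterms, which decreases $\|w\|_2$ and hence increases the geometric margin, but the statement as written holds already for the simple choice above.)
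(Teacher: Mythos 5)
Your proof is correct and follows essentially the same route as the paper's: both exploit the one-hot structure of $\Phi(x)$ so that $\langle w,\Phi(x)\rangle$ reads off a single coordinate of $w$, and both arrange the output to be $\pm\tfrac12$. The only difference is an affine reparametrization — the paper takes $w_a=\mathbf{1}[a\in S]$ with $b=-\tfrac12$ rather than your $w_a=\pm\tfrac12$ with $b=0$ — which is immaterial for the lemma, though the paper's choice gives $\|w\|_2^2=|S|$ and thus feeds directly into the $4|S|$ perceptron mistake bound quoted in \S\ref{sec:main-theorem}.
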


\noindent
\emph{Intuition.} Each minterm indicator is a one-hot basis vector that uniquely identifies the truth assignment $r(x)$. Any boolean decision rule over $r(x)$, including non-monotone combinations such as XOR, is representable by selecting the corresponding minterm coordinates and thresholding. A short proof appears in Appendix~\ref{app:proof-minterm}.

\subsection{Main Theorem: Finite Observability $\Rightarrow$ Linear Separability}
\label{sec:main-theorem}

\begin{thm}\label{thm:fo-to-linear}
Let $\mathcal{C}$ be a finitely observable class with predicates $P=\{p_1,\dots,p_n\}$.
Then for every $L\in\mathcal{C}$ there exist $(w,b)$ such that
\begin{equation}
  x\in L \iff \langle w,\Phi(x)\rangle + b > 0 \qquad\text{for all }x\in\Sigma^*,
\end{equation}
where $\Phi$ is the minterm feature map in \eqref{eq:minterm}. The margin is at least $1/2$.
\end{thm}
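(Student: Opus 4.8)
The plan is to chain together the definition of finite observability with Lemma~\ref{lem:minterm}, so the proof is a two-step composition. First I would unfold the hypothesis: since $\mathcal{C}$ is finitely observable, there is a single finite predicate set $P=\{p_1,\dots,p_n\}$ — fixed once and for all, independent of the particular language — such that, by \eqref{eq:finite-observability}, for the given $L\in\mathcal{C}$ there exists $S_L\subseteq\{0,1\}^n$ with $x\in L \iff r(x)\in S_L$ for every $x\in\Sigma^*$, where $r(x)=(p_1(x),\dots,p_n(x))$ is the truth vector. This is the only place the finite-observability assumption is used.

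Second, I would instantiate Lemma~\ref{lem:minterm} with $S=S_L$. The lemma produces weights $w\in\mathbb{R}^{2^n}$ and bias $b\in\mathbb{R}$ such that $\langle w,\Phi(x)\rangle+b>0 \iff r(x)\in S_L$ for all $x\in\Sigma^*$, together with the quantitative guarantee that the functional margin is at least $1/2$ and the geometric margin equals $1/(2\|w\|_2)$. Composing this equivalence with the one from the previous paragraph gives $x\in L \iff \langle w,\Phi(x)\rangle+b>0$, which is exactly the assertion of the theorem; the margin claim is inherited verbatim from the lemma. Concretely the witnessing hyperplane is the one-hot selector $w_a=1$ for minterms $a\in S_L$ and $w_a=-1$ otherwise, with $b=1/2$, but the details of that construction belong to the proof of Lemma~\ref{lem:minterm} (Appendix~\ref{app:proof-minterm}).

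There is essentially no obstacle at the level of this theorem: the substantive work has been pushed elsewhere — into Lemma~\ref{lem:minterm} for the linearization step, and, upstream, into verifying that each standard subregular family (SL, SP, LT, PT, LTT, TSL) genuinely admits a finite predicate set for fixed parameters, which is handled in Appendix~\ref{app:finite-observability-subregular}. The one subtlety I would be careful to emphasize is uniformity: because $P$, and hence the feature map $\Phi$, does not depend on $L$ but only on the class parameters, while $(w,b)$ alone varies with $L$, the statement is a genuine linear-separability result for the whole class under a single embedding rather than a vacuous per-language reparametrization.
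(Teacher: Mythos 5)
Your proof is correct and follows exactly the paper's own argument: unfold finite observability to obtain $S_L$, then instantiate Lemma~\ref{lem:minterm} with $S=S_L$ and inherit the margin. The only (inessential) discrepancy is your parenthetical concrete construction ($w_a=-1$ off $S_L$, $b=1/2$) versus the paper's ($w_a=0$ off $S$, $b=-1/2$); both separate with functional margin at least $1/2$, though the paper's choice gives the smaller $\|w\|_2^2=|S|$ used later in its perceptron bound.
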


\noindent
\emph{Explanation.} By finite observability, there exists $S_L\subseteq\{0,1\}^n$ such that \eqref{eq:finite-observability} holds.
Lemma~\ref{lem:minterm} constructs a single hyperplane that separates minterms in $S_L$ from those outside $S_L$. Composing with $\Phi$ yields the desired linear separator for $L$.
See Appendix~\ref{app:proof-main} for a formal proof.

\paragraph{Learning-theoretic consequences.}
Each feature vector $\Phi(x)$ is one-hot with $\|\Phi(x)\|_2=1$, so the effective radius is $R=1$ and the geometric margin is $\gamma\ge 1/(2\|w\|_2)$.
Hence a perceptron run on \emph{separable} data suffers at most $M\le (1/\gamma)^2\le 4\|w\|_2^2=4|S|$ mistakes \citep{novikoff1962}.
Moreover, the class of halfspaces in $\mathbb{R}^{2^n}$ has VC dimension at most $2^n$ \citep{blumer1989}, and margin-based generalization bounds apply directly \citep{cortes1995,shalev2014}.
In short, finitely observable classes are \emph{easy to learn} under standard linear methods.

\subsection{From Theory to Practice: Low-dimensional Predicate Features}
\label{sec:practice}

Although the minterm embedding is conceptually clean, it is exponential in $n$.
In practice, the subregular families admit \emph{direct} low-dimensional predicate features that already separate the target languages:
\begin{itemize}
  \item \textbf{$\mathsf{SL}_k$}: presence/absence of $k$-grams (with boundary padding) suffices to implement forbidden-substring constraints.
  \item \textbf{$\mathsf{SP}_k$}: presence/absence of $k$-length subsequences suffices to implement forbidden nonadjacent pairs/tuples.
  \item \textbf{$\mathsf{LT}_k$}: $k$-gram features augmented with $(k{-}1)$-length prefix/suffix indicators determine membership.
  \item \textbf{$\mathsf{PT}_m$}: presence/absence of subsequences of length $\le m$~\citep{simon1975}.
  \item \textbf{$\mathsf{LTT}_{k,\boldsymbol\tau}$}: bounded-threshold counts for substrings of length $\le k$.
  \item \textbf{$\mathsf{TSL}_k$}: $k$-gram features on tier projections $\pi_T$ (for $T\subseteq\Sigma$) capture nonlocal constraints through local ones on the tier \citep{heinz2011tsl}.
\end{itemize}
These directly usable features retain the interpretability of symbolic constraints and allow linear classifiers (logistic regression, linear SVMs, perceptrons) to achieve exact separation on the intended targets while remaining computationally lightweight.

\paragraph{Scope and limitations.}
Our results require fixing class parameters (e.g., $k$, $m$, $\boldsymbol\tau$) and the finite alphabet, which is standard when defining target language families.
Unions over unbounded parameters (e.g., $\mathsf{PT}$ $=\bigcup_m \mathsf{PT}_m$) should be viewed as directed unions of finitely observable subclasses (Appendix~\ref{app:finite-observability-subregular}).
While the minterm embedding itself is exponential, practical predicate sets for the standard subregular families are polynomial in $|\Sigma|$ and the fixed parameters, and can be trained with sparse linear methods.

\section{A Counterexample Showing the Limits of Fixed Finite Predicate Sets}
\label{sec:analysis-regular-counterexample}

Our main results hinge on \emph{finite observability}: once a finite set of predicates $P=\{p_1,\dots,p_n\}$ is fixed, any language definable in this framework is a union of the finitely many \emph{predicate cells} (truth assignments) in $\{0,1\}^n$.
This section shows that, for any \emph{fixed} finite predicate set $P$, there exist regular languages that cannot be detected (i.e., represented as unions of predicate cells), hence cannot be linearly separated in the corresponding feature space.
The proof relies on the Myhill--Nerode theorem \citep{sipser1996introduction}.

\paragraph{Setup.}
Fix any finite alphabet $\Sigma$ and any finite predicate set $P=\{p_1,\dots,p_n\}$ with $n\ge 1$.
Let $r(x)=(p_1(x),\dots,p_n(x))\in\{0,1\}^n$.
The induced equivalence relation $\equiv_P$ on $\Sigma^*$ groups two strings iff they share the same truth vector: $x\equiv_P y \iff r(x)=r(y)$.
There are at most $2^n$ such equivalence classes (\emph{cells}).
Any language expressible via $P$ is necessarily a union of $\equiv_P$-classes.

\begin{thm}[Regular language not representable by a fixed $P$]
\label{thm:regular-not-captured}
For any fixed finite predicate set $P$ over $\Sigma$, there exists a regular language $L$ such that $L$ is not a union of $\equiv_P$-classes.
Equivalently, no linear classifier over the fixed predicate feature space can separate $L$ from its complement.
\end{thm}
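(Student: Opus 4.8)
The plan is a pigeonhole count against the finitely many predicate cells, followed by an explicit regular witness. Since $P$ has only $n$ predicates, $r(\cdot)$ takes at most $2^n$ values, so $\equiv_P$ partitions the (infinite, assuming $\Sigma\neq\emptyset$) set $\Sigma^*$ into at most $2^n$ cells. The Myhill--Nerode theorem supplies, for each modulus, a regular language whose right-congruence has arbitrarily large finite index; I would take $L_0=\{w\in\Sigma^*:|w|\equiv 0 \pmod{2^n+1}\}$, which has exactly $2^n+1$ Myhill--Nerode classes, with pairwise-inequivalent representatives $\varepsilon,a,a^2,\dots,a^{2^n}$ for a fixed $a\in\Sigma$. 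As there are $2^n+1$ representatives but at most $2^n$ cells, two of them collide: $a^i\equiv_P a^j$ with $i\neq j$.

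Next I would convert the collision into a separating regular language. Since $a^i$ and $a^j$ are Myhill--Nerode-inequivalent for $L_0$, there is a suffix $z$ with $a^i z\in L_0$ and $a^j z\notin L_0$ (concretely $z=a^{\,2^n+1-i}$ works, since $i\not\equiv j\pmod{2^n+1}$). Let $L:=L_0 z^{-1}=\{w\in\Sigma^*: wz\in L_0\}$; regular languages are closed under right quotient, so $L$ is regular. By construction $a^i\in L$ and $a^j\notin L$, while $a^i\equiv_P a^j$; hence the $\equiv_P$-cell containing $a^i$ is neither contained in $L$ nor disjoint from $L$, so $L$ is not a union of $\equiv_P$-classes. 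A one-line alternative is the finite language $L=\{a^i\}$, which already works, but the quotient construction makes the role of Myhill--Nerode explicit.

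For the ``equivalently'' clause I would note that both the raw feature map $r$ and the minterm map $\Phi$ depend on $x$ only through $r(x)$, so $a^i\equiv_P a^j$ forces $\Phi(a^i)=\Phi(a^j)$. Any affine functional $\langle w,\Phi(\cdot)\rangle+b$ therefore takes equal values at $a^i$ and $a^j$, so every linear classifier over the fixed predicate feature space assigns them the same label; since separating $L$ from $\Sigma^*\setminus L$ requires opposite labels on $a^i\in L$ and $a^j\notin L$, no such classifier exists.

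I do not expect a real obstacle here. The only points needing a little care are (i) excluding the degenerate alphabet so that $\Sigma^*$ is genuinely infinite; (ii) checking the arithmetic that $z=a^{\,2^n+1-i}$ indeed distinguishes $a^i$ from $a^j$ modulo $2^n+1$; and (iii) keeping the logical target straight --- we only need $\equiv_P$ to fail to refine the two-block partition $\{L,\Sigma^*\setminus L\}$, which is strictly weaker than failing to refine the full Myhill--Nerode congruence of $L$, and the pigeonhole collision delivers exactly that.
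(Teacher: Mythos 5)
Your proposal is correct, and while it starts from the same idea as the paper (count the at most $2^n$ predicate cells against the Myhill--Nerode classes of a modular-length language over a unary generator), it diverges at the crucial step in a way that actually matters. The paper's own proof fixes $m>2^n$, takes $L_m=\{a^k: k\equiv 0 \pmod m\}$, and argues: $\equiv_P$ has too few classes to refine the Myhill--Nerode congruence of $L_m$, \emph{hence} $L_m$ is not a union of $\equiv_P$-classes. That inference is not valid for an arbitrary equivalence relation: being a union of $E$-classes only requires $E$ to refine the two-block partition $\{L_m,\Sigma^*\setminus L_m\}$, not the full Myhill--Nerode congruence (the two-block partition itself saturates $L_m$ with only two classes), and $\equiv_P$ need not be a right congruence, so the coarsest-saturating-congruence property of Myhill--Nerode does not apply. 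Indeed, if some $p_j$ happens to be the indicator of $L_m$ itself, the paper's chosen witness fails outright for that $m$. Your version closes exactly this gap --- a point you flag explicitly in item (iii): the pigeonhole collision $a^i\equiv_P a^j$ among the $2^n+1$ pairwise-inequivalent representatives, followed by the right quotient $L_0z^{-1}$ (or simply $L=\{a^i\}$) that contains $a^i$ but not $a^j$, directly exhibits a cell straddling $L$ and its complement. The arithmetic with $z=a^{2^n+1-i}$ checks out, and the ``equivalently'' clause follows as you say because $\Phi$ factors through $r$, forcing any affine functional to be constant on cells. The only cost of your route is that the witness language depends on which two representatives collide (so it is $P$-dependent rather than a single uniform $L_m$), but the theorem only asserts existence for each fixed $P$, so this is harmless --- and it is the price of a sound argument.
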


\begin{proof}
Let $m>2^n$ be any integer and consider the one-letter alphabet $\Sigma=\{a\}$.
Define the regular language
\[
  L_m \;=\; \{\, a^{k} \mid k \equiv 0 \pmod m \,\}.
\]
It is well known (and follows from Myhill--Nerode) that $L_m$ has exactly $m$ distinct right-congruence classes, corresponding to length modulo $m$ \citep{sipser1996introduction}.
For any equivalence relation $E$ whose index (number of classes) is strictly smaller than $m$, $E$ cannot refine the Myhill--Nerode congruence of $L_m$; hence $L_m$ cannot be a union of $E$-classes.
Since $\equiv_P$ has at most $2^n$ classes and $m>2^n$, $\equiv_P$ does not refine the Myhill--Nerode congruence of $L_m$.
Therefore $L_m$ is not a union of $\equiv_P$-classes and cannot be represented in the predicate framework determined by $P$.
Because linear classification over $P$ can only realize unions of $\equiv_P$-cells, no linear separator exists for $L_m$ in that fixed feature space.
\end{proof}

\paragraph{Implications.}
Theorem~\ref{thm:regular-not-captured} formalizes an intuitive limitation: fixing a finite observation vocabulary $P$ yields at most $2^n$ observable profiles, while regular languages can require arbitrarily many distinct right-congruence classes (e.g., $m$ for $L_m$), even over a unary alphabet.
Thus, unlike the subregular families discussed in this paper, the class of all regular languages cannot be captured~(nor linearly separated) within any single finite predicate space.
\if0
This clarifies the scope of our finite-observability program: it targets \emph{families whose essential structure is visible through a bounded set of primitive observations}.
\fi

\if0
\paragraph{Star-Free Languages as an Intermediate Case.}
In addition to regular languages, it is instructive to consider \emph{star-free} languages. 
They are strictly weaker than the full class of regular languages but strictly more expressive than the standard subregular families. 
Classical results show that locally testable and piecewise testable languages are contained within star-free, while strictly local and strictly piecewise families form even smaller subclasses \cite{mcnaughton1971,brzozowski1973characterizations,simon1975,rogers2007languages}. 
Thus we have a proper hierarchy:
\[
\text{SL},\ \text{SP} \;\subset\; \text{LT},\ \text{PT} \;\subset\; \text{Star-Free} \;\subset\; \text{Regular}.
\]

Unlike the subregular classes, the star-free family does not admit finite observability.
This follows from the fact that FO[<], which characterizes star-free languages, forms strict quantifier-rank and dot-depth hierarchies \cite{BrzozowskiCohen1971,kufleitner2012around,place2021separation}.
No finite predicate basis suffices to capture all star-free languages.
Thus, star-free languages provide a useful counterexample: finite observability holds robustly for the subregular families but breaks down once we move upward in expressive power, even before reaching all regular languages.
\fi
\if0
Unlike the subregular classes, the star-free family as a whole does not admit finite observability. 
This follows from the fact that FO[<], which characterizes star-free languages, forms strict quantifier-rank and dot-depth hierarchies \cite{BrzozowskiCohen1971,kufleitner2012around,place2021separation}. 
No fixed finite predicate basis suffices to capture all star-free languages. 
Consequently, star-free languages provide a useful intermediate counterexample class: they demonstrate that finite observability holds robustly for the subregular families but breaks down once we move upward in expressive power, even before reaching the full class of regular languages.
\fi

\section{Synthetic Experiments}
\begin{figure*}[t]
\centering
\begin{tabular}{lll}
A)~$\mathsf{SL}_3$~(noise) & B)~$\mathsf{SP}_2$~(noise) & C)~$\mathsf{LTT}_2$~(noise) \\
\includegraphics[scale=0.33]{./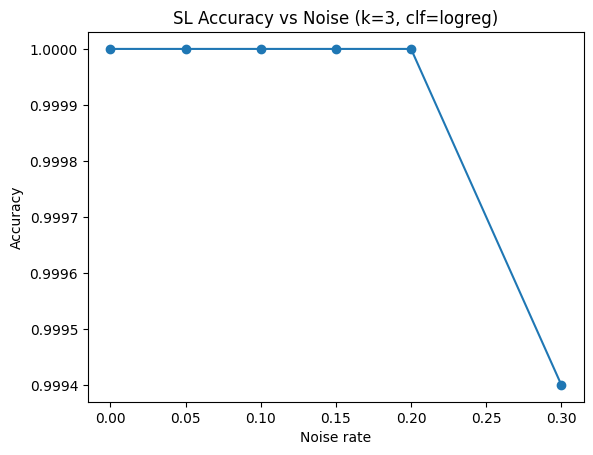} &
\includegraphics[scale=0.33]{./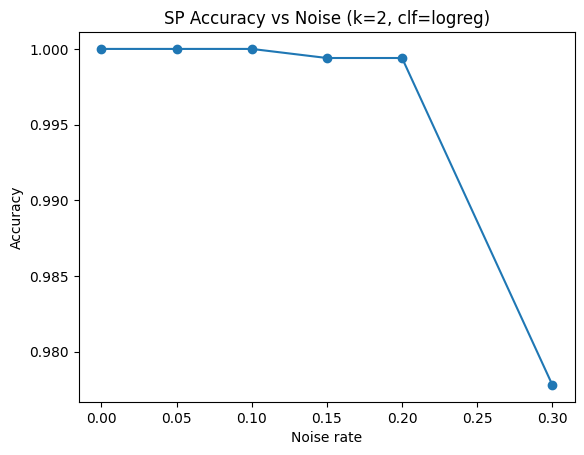} &
\includegraphics[scale=0.33]{./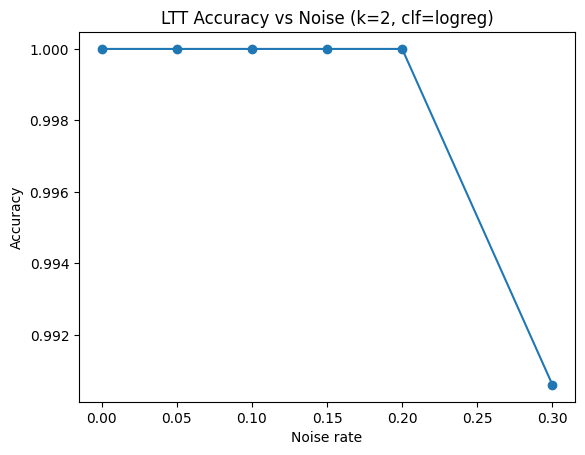} \\
D)~$\mathsf{SL}_3$~(train size) & E)~$\mathsf{SP}_2$~(train size) & F)~$\mathsf{LTT}_2$~(train size) \\
\includegraphics[scale=0.33]{./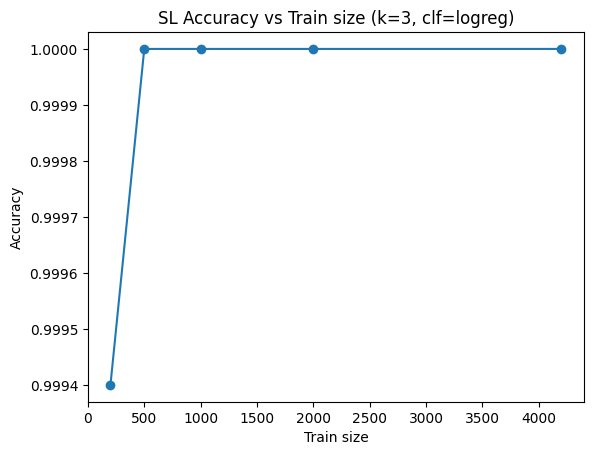} &
\includegraphics[scale=0.33]{./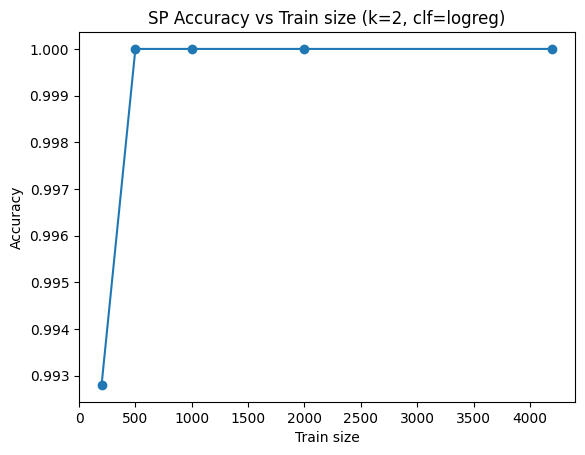} &
\includegraphics[scale=0.33]{./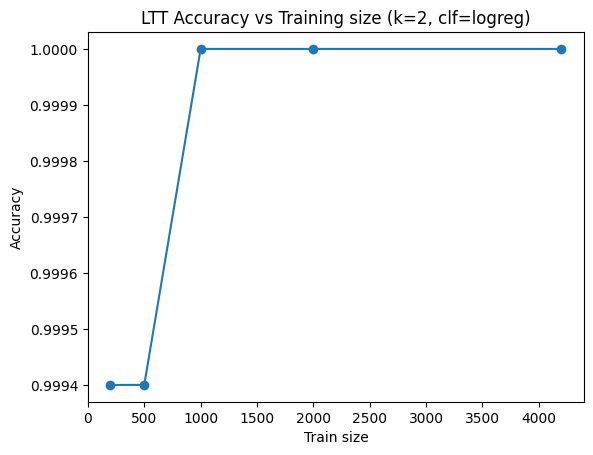}
\end{tabular}
\caption{Accuracy of synthetic classification: (A) $\mathsf{SL}_3$, (B) $\mathsf{SP}_2$ and (C) $\mathsf{LTT}_2$ under label noise, (D) $\mathsf{SL}_3$, (E) $\mathsf{SP}_2$ and (F) $\mathsf{LTT}_2$ with varying training sizes.
 }
\label{fig:SLSPLTT}
\end{figure*}
\subsection{Linear Separability}
To empirically validate our theoretical results, we conducted controlled experiments 
on artificially generated languages. The goal of these experiments is not to maximize 
accuracy itself, but rather to visualize how linear separability manifests across 
different subregular classes when noise or sample size is varied. 

To keep the presentation concise and focused, we concentrate on three representative 
classes from the subregular hierarchy: 
\emph{Strictly Local ($\mathsf{SL}$)}, \emph{Strictly Piecewise ($\mathsf{SP}$)}, and 
\emph{Locally Threshold Testable ($\mathsf{LTT}$)}. 
$\mathsf{SL}$ captures local adjacency constraints and provides the most transparent connection 
to phonotactic well-formedness. $\mathsf{SP}$ models long-distance restrictions, such as 
co-occurrence bans, which are common in phonology and morphology. Finally, $\mathsf{LTT}$ extends 
local tests with counting thresholds, allowing us to probe more complex phenomena and 
to examine how separability behaves as the constraints become richer. 

For each class, we generate positive and negative examples according to the defining 
predicates, then train logistic regression models using the corresponding predicate 
features. By sweeping noise levels and training set sizes, we observe how empirical 
performance converges toward the theoretical guarantee of linear separability.

\subsubsection{Strictly Local Languages~($\mathsf{SL}_k$)}
\paragraph{Setup.}
We generated synthetic datasets for $\mathsf{SL}_3$ languages.
A set of forbidden 3-grams was defined over a finite alphabet with explicit inclusion of boundary symbols <$\#$>, <$/\#$>.
Positive examples were sampled uniformly from sequences that avoided all forbidden 3-grams, while negative examples were constructed to guarantee the presence of at least one forbidden 3-gram (including those at boundaries). Each string length was sampled randomly within a fixed range (5–15 characters).
The feature representation was the full set of boundary-aware 3-grams over the extended alphabet $\Sigma\cup\{\text{<}\#\text{>},\text{<}/\#\text{>}\}$.
This guarantees that the deciding predicates are included.

\paragraph{Results.}
Figure~\ref{fig:SLSPLTT} (A) shows the accuracy under a noise sweep, where increasing proportions of training labels were randomly flipped. As predicted, performance is exactly 100\% in the noise-free setting, and decreases as noise increases. Figure~\ref{fig:SLSPLTT} (D) shows the size sweep, where training set size was varied. Performance converges rapidly toward 100\% as more data are available, reflecting the finite observability of deciding predicates.

\subsubsection{Strictly Piecewise Languages~($\mathsf{SP}_k$)}
\paragraph{Setup.}
For $\mathsf{SP}_2$ languages, we defined a forbidden set of subsequences of length two (e.g., $(a,c)$, $(b,d)$) over an alphabet of four symbols. Positive examples were strings avoiding these subsequences, while negative examples were constructed to ensure that at least one forbidden subsequence occurred. String lengths were randomly sampled in the range 6–18 characters.
The feature representation was the set of all possible symbol pairs (ordered subsequences) over the alphabet. This again ensures that all deciding predicates are explicitly represented in the feature space.

\paragraph{Results.}
As with $\mathsf{SL}_3$, Figure~\ref{fig:SLSPLTT} (B) shows accuracy under a noise sweep, and Figure~\ref{fig:SLSPLTT} (E) shows results for a size sweep. In the absence of label noise, classification achieves 100\% accuracy, exactly matching theoretical expectations. 
When noise is introduced, performance degrades smoothly, illustrating the robustness of the separating representation.

\subsubsection{Locally 
Threshold Testable Languages~($\mathsf{LTT}_k$)}
\paragraph{Setup.}
For the $\mathsf{LTT}_2$ condition, we defined membership in terms of thresholded counts of substrings with boundaries. Specifically, positive examples were required to contain at least two instances of the symbol $a$, begin with $a$ at the left boundary, contain at most one occurrence of the 2-gram $bb$, and contain at most one occurrence of $c$ at the right boundary. Positive examples were generated by rejection sampling with targeted repairs to ensure that all threshold constraints were satisfied. Negative examples were produced by minimally breaking one constraint, for example by reducing the number of $a$’s, inserting an additional $bb$, altering the prefix so it no longer begins with $a$, or forcing a terminal $c$. This guaranteed that every negative sequence violated at least one deciding predicate.
Feature vectors were constructed from all substrings of length at most two, extended with boundary markers, with each feature indicating whether the count of that substring exceeded a given threshold. These features explicitly encode the deciding predicates of $\mathsf{LTT}_2$.

\paragraph{Results.}
Figure~\ref{fig:SLSPLTT} (C) shows the effect of label noise on classification accuracy, and Figure~\ref{fig:SLSPLTT} (F) shows the effect of training size. In the noise-free setting, accuracy reached 100\%, confirming the theoretical prediction of perfect separability. As noise increased, accuracy decreased, while larger training sizes quickly restored performance toward the ideal. 
These results provide empirical evidence that threshold-based constraints in $\mathsf{LTT}$ languages are finitely observable and linearly separable when represented with appropriate features.

\if0
\subsection{Low-Quantile Margins}
\begin{figure}[t]
\centering
\includegraphics[scale=0.475]{./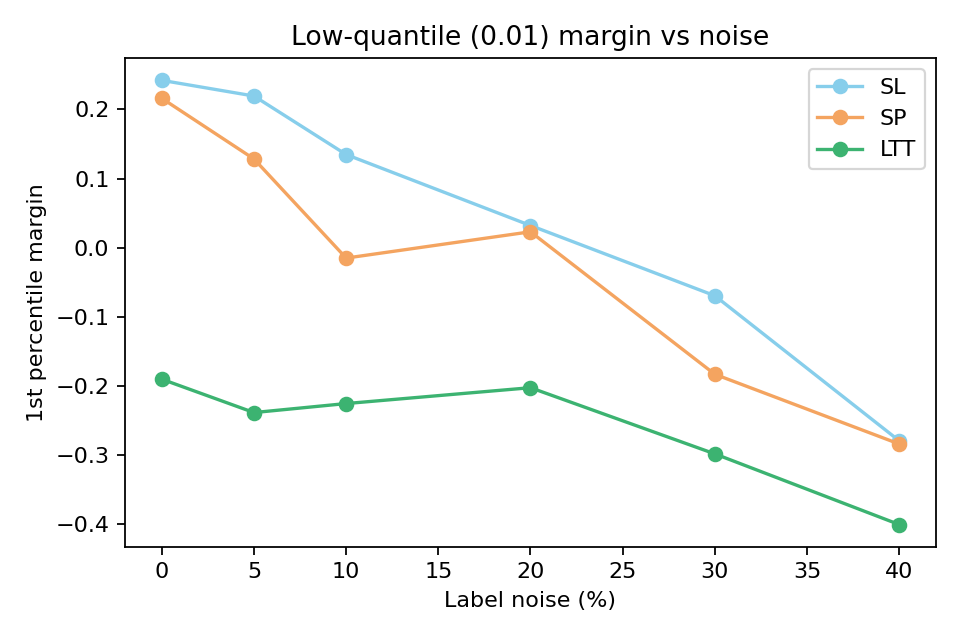}
\caption{1\% quantile normalized margin vs. label noise for $\mathsf{SL}$, $\mathsf{SP}$, and $\mathsf{LTT}$ languages.}
\label{fig:quantile}
\end{figure}
\paragraph{Setup.}
To further probe the robustness of linear separability, we measured how the low-quantile normalized margins behave as label noise increases. For each subregular class: Strictly Local ($\mathsf{SL}$), Strictly Piecewise ($\mathsf{SP}$), and Locally Threshold Testable ($\mathsf{LTT}$), we trained logistic regression classifiers on synthetic datasets of size 2,000, where a fraction of the training labels were flipped at random. Test sets of 1,000 sequences per class were kept noise-free. After training, we computed the normalized margin for each test example, defined as $m_i \;=\; y_i \cdot \frac{w^\top x_i + b}{\lVert w \rVert_2}$ and extracted the 1st percentile ($q_{0.01}$) across the distribution. This quantile analysis highlights the fragility of the worst-classified points, thus serving as a probe of separability under noise.

\paragraph{Results.}
Figure~\ref{fig:quantile} shows the 1\% quantile margin as a function of label noise. $\mathsf{SL}$ (light blue) maintains relatively high margins at low noise levels and exhibits a gradual decline, indicating strong robustness of local constraints. $\mathsf{SP}$ (light orange) drops more quickly with noise, reflecting the fact that long-distance constraints are easily disrupted by even a small number of mislabels. 
$\mathsf{LTT}$ (light green) displays consistently negative margins, as its threshold-based definition inherently places a significant portion of examples near the decision boundary; nevertheless, its degradation with noise is moderate and smoother than that of $\mathsf{SP}$.

\begin{tikzpicture}
\begin{axis}[
  width=12cm, height=5.3cm,
  xlabel={Length $n$}, ylabel={$P(y\!=\!1)$},
  xmin=0, xmax=60, ymin=-0.05, ymax=1.05,
  ytick={0,0.5,1}, grid=both, legend pos=south east,
  title={mod 3: Predicted Probability vs Length (Chebyshev $\to$ Fourier, perfect)}
]
\def\alpha{25}
\addplot+[mark=none, thick, domain=0:60, samples=361]
  ({x},{1/(1 + exp(-\alpha*(cos(deg(2*pi*x/3)) - 0.5)))});
\addlegendentry{Predicted probability (perfect approx)}

\addplot+[only marks, mark=*, mark size=1.6pt]
table[row sep=\\] {
x y \\
0 1\\ 1 0\\ 2 0\\ 3 1\\ 4 0\\ 5 0\\
6 1\\ 7 0\\ 8 0\\ 9 1\\ 10 0\\ 11 0\\
12 1\\ 13 0\\ 14 0\\ 15 1\\ 16 0\\ 17 0\\
18 1\\ 19 0\\ 20 0\\ 21 1\\ 22 0\\ 23 0\\
24 1\\ 25 0\\ 26 0\\ 27 1\\ 28 0\\ 29 0\\
30 1\\ 31 0\\ 32 0\\ 33 1\\ 34 0\\ 35 0\\
36 1\\ 37 0\\ 38 0\\ 39 1\\ 40 0\\ 41 0\\
42 1\\ 43 0\\ 44 0\\ 45 1\\ 46 0\\ 47 0\\
48 1\\ 49 0\\ 50 0\\ 51 1\\ 52 0\\ 53 0\\
54 1\\ 55 0\\ 56 0\\ 57 1\\ 58 0\\ 59 0\\
60 1\\
};
\addlegendentry{True label}
\end{axis}
\end{tikzpicture}
\fi

\section{Experiments on Morphological Data}
\paragraph{Setup.}
To test the applicability of our theoretical findings to natural language data, we conducted experiments on English derivational morphology using the MorphoLex-en database~\citep{sanchez2018morpholex}. 
MorphoLex provides a large-scale lexicon of English words annotated with morphological features, including derivational affixes. We extracted affix sequences for each word, consisting of ordered prefixes followed by suffixes. Since MorphoLex does not always provide explicit segmentation, we complemented the data with a heuristic segmenter based on a comprehensive inventory of common English affixes. This yielded a dataset in which each item was represented by a sequence of affixes.

To create a classification task aligned with the subregular framework, we constructed positive examples from well-formed affix sequences in MorphoLex, and generated negative examples by minimally perturbing them. Specifically, negatives were created by either permuting the order of affixes or substituting one affix with another from the same vocabulary. This ensured that negative sequences closely resembled real ones in length and vocabulary but violated natural morphological constraints. The resulting dataset was divided into training, development, and test sets, with splits stratified by word to avoid leakage.

We represented each affix sequence using two subregular feature classes: Piecewise Testable ($\mathsf{PT}$) predicates, capturing all subsequences up to length 
$m=2$, and Locally Threshold Testable ($\mathsf{LTT}$) predicates, encoding boundary-aware 
$k=2$. As in the synthetic experiments, we trained logistic regression classifiers on these feature vectors. Importantly, the classifiers were not constructed directly from the theoretical separating hyperplanes, but were trained from data, demonstrating that linear separability is empirically recoverable by standard learning algorithms.

\paragraph{Results.}
\begin{figure}[t]
\centering
\includegraphics[scale=0.475]{./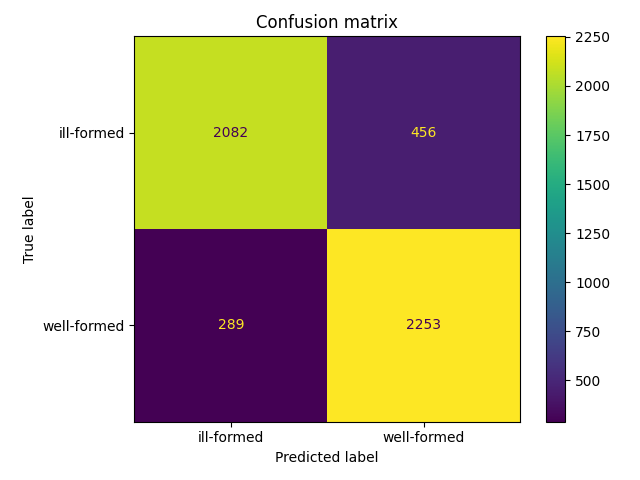}
\caption{Confusion matrix for the morphological well-formedness classification task, showing accurate separation of well-formed (positive) and ill-formed (negative) affix sequences.}
\label{fig:confusion}
\end{figure}
\begin{figure*}[t]
\centering
\includegraphics[scale=0.6]{./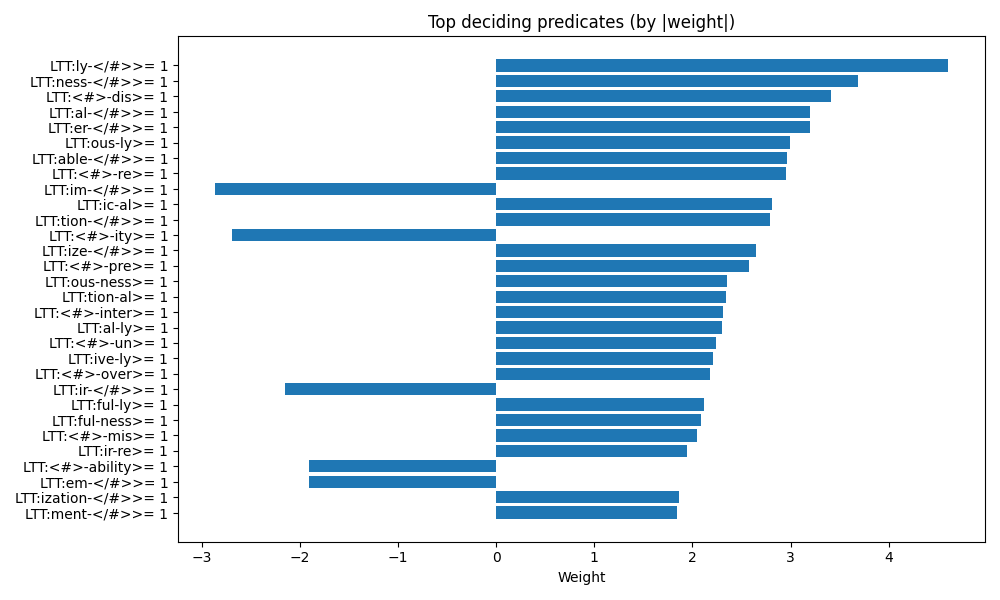}
\caption{Top-weighted predicates learned by the classifier on the morphological dataset, illustrating linguistically meaningful affix constraints such as boundary-sensitive suffixes and prefix–suffix co-occurrence restrictions.}
\label{fig:features}
\end{figure*}
\begin{figure}[t]
\centering
\includegraphics[scale=0.475]{./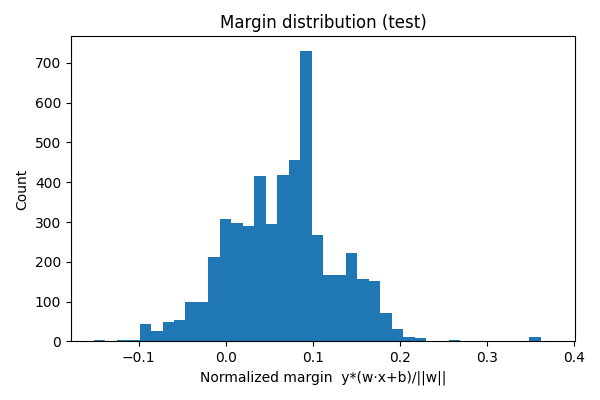}
\caption{Histogram of normalized margins on the test set. Most examples lie at positive margins, confirming effective linear separation.}
\label{fig:margin_hist}
\end{figure}
On the held-out test set, the classifiers achieved strong performance. Logistic regression with $\mathsf{PT+LTT}$ features typically reached accuracy above 85\%, with F1 scores 86\%.
Confusion matrices in Figure~\ref{fig:confusion} showed that more errors occurred in negative items that were close to positive forms in surface structure.
To better understand what the classifier has learned from the real morphological data, we inspected the top-weighted predicates in the logistic regression model (Figure~\ref{fig:features}). Strikingly, the most influential features correspond to linguistically interpretable affix constraints. For instance, the model strongly weights boundary-sensitive predicates such as the presence of {\it -ly} or {\it -ness} at the right edge, reflecting the well-attested distributional restrictions of these suffixes. Similarly, certain prefix–suffix interactions emerge as highly discriminative, such as restrictions involving {\it dis-} or {\it re-} in combination with other affixes.
These results demonstrate that the classifier is not only achieving high predictive accuracy, but is also recovering linguistically meaningful generalizations that align with theoretical expectations in morphology.

\paragraph{Analysis.}
To assess the practical linear separability of morphological patterns, we analyzed the distribution of normalized margins on the held-out test set. For each word, the margin was defined as
$m_i \;=\; y_i \cdot \frac{w^\top x_i + b}{\lVert w \rVert_2}$, where $y_i \in \{+1,-1\}$ is the gold label, $x_i$ the feature vector, $w$ the learned weight vector, and $b$ the intercept. 
A positive margin indicates that the example is correctly classified and lies on the correct side of the separating hyperplane, while the magnitude of the margin reflects the confidence of the classification.

Figure~\ref{fig:margin_hist} shows the histogram of margins for all test examples. The distribution is clearly skewed toward positive values, with the majority of examples clustered around small but positive margins. This indicates that the classifier has successfully learned a separating hyperplane consistent with the morphological constraints. At the same time, the long tail of low and slightly negative margins reveals the presence of ambiguous or difficult cases, such as words with rare affixes or irregular formations. 

The overall pattern supports our theoretical claim: although real data inevitably contains noise and irregularities, the linear predicate-based representation yields a decision boundary that cleanly separates most examples. The margin distribution therefore provides empirical evidence that subregular constraints, when instantiated on lexical data, remain both observable and learnable within a linear framework.

\if0
\paragraph{Regularization-path analysis.}
To further examine linear separability from the perspective of logistic regression, 
we varied the regularization parameter \(C\). In the standard formulation, the 
logistic regression objective is given by
$\min_{w,b} \; \frac{1}{2}\|w\|_2^2 + C \sum_{i=1}^n 
\log\!\left(1 + \exp\!\big(-y_i (w^\top x_i + b)\big)\right)$,
where \(C>0\) controls the trade-off between the regularization term 
\(\tfrac{1}{2}\|w\|_2^2\) and the empirical log-loss. Larger values of \(C\) 
weaken the regularization and allow the model to place more weight on minimizing 
the training loss.

For each value of \(C\), we measured two diagnostic quantities. The first is the 
coefficient norm \(\|w\|_2\), which indicates the strength of the separating 
hyperplane. The second is the training log-loss, which captures how well the 
predicted probabilities match the gold labels. 

As shown in Figure~\ref{fig:path}~(left), the coefficient norm grows steadily as \(C\) increases, 
indicating that the model places increasing emphasis on driving the data toward 
perfect separation. In parallel, the training log-loss in Figure~\ref{fig:path}~(right) decreases, consistent with the fact that the data are nearly linearly separable. 
This joint behavior~(diverging weights together with vanishing loss) is the 
characteristic signature of logistic regression under separable conditions, and 
provides further evidence that our morphological task exhibits 
linear separability.
\fi

\if0
The top-weighted features learned by the model corresponded to linguistically meaningful constraints, such as restrictions on the order of certain prefixes
(e.g., {\it un-} rarely precedes {\it re-}), or limitations on the co-occurrence of certain suffixes (e.g., {\it -ness} and {\it -ly} do not appear together).
\fi
\section{Conclusion}
\begin{table*}[t]
\centering
\begin{tabular}{l l l}
\toprule
Class & Predicate set $P$ (finite) & Rationale \\
\midrule
$\mathsf{SL}_k$ & $k$-gram presence in $\tilde{x}$ & forbidden substrings \\
$\mathsf{SP}_k$ & $k$-subsequence presence in $\tilde{x}$ & forbidden subsequences \\
$\mathsf{LT}_k$ & $k$-grams + $(k{-}1)$ prefix/suffix & local testability \\
$\mathsf{PT}_m$ & subsequences of length $\le m$ & Simon's theorem \\
$\mathsf{LTT}_{k,\boldsymbol\tau}$ & thresholded counts for $|g|\le k$ & bounded counting \\
$\mathsf{TSL}_k$ & $k$-grams on all tiers $\pi_T(\tilde{x})$ & tier projection + $\mathsf{SL}$ \\
\bottomrule
\end{tabular}
\caption{Summary: Finite observability of standard
subregular classes.}
\label{tab:proof}
\end{table*}
This work has shown that all major subregular language classes are linearly separable when expressed through their deciding predicates. This establishes finite observability across the subregular hierarchy and guarantees learnability with simple linear models. Synthetic and morphological experiments confirm these guarantees while also recovering linguistically interpretable constraints, thereby providing a geometric foundation for lightweight, interpretable models that connects naturally to earlier probabilistic approaches such as \citet{hayes2008maximum}'s Maximum Entropy model of phonotactics.

Our perspective contrasts with research on the expressive power of recurrent neural networks (RNNs), which emphasizes their maximal computational capacity, from Turing completeness under infinite precision~\citep{siegelmann1992computational} to pushdown-automaton simulations with LSTMs~\citep{weiss2018practical}. Rather than focusing on what is computationally possible in principle, we identify a restricted region of the hierarchy where natural language patterns actually reside and show that these classes are inherently simple to learn. Future work will examine whether larger families admit analogous notions of observability, and whether the effective capacity of neural models in practice aligns more closely with subregular structure than with their theoretical upper bounds.

\if0
This work has shown that all major subregular language classes are linearly separable when expressed through their deciding predicates. This establishes finite observability across the subregular hierarchy and guarantees that these languages can be learned from finite data using simple linear models. The theoretical contribution situates subregular grammars within a broader mathematical framework that connects formal language theory to geometric notions of separability.

Our experiments provided complementary evidence. Synthetic datasets confirmed that separability is perfectly achievable in the absence of noise, with accuracy degrading in predictable ways under noisy conditions. Real-data experiments on English derivational morphology further demonstrated that classifiers trained on subregular predicates achieve strong accuracy and reveal interpretable linguistic constraints such as affix ordering and co-occurrence restrictions. These results underscore the dual strengths of the framework: formal rigor and empirical interpretability.

Future work will extend these insights beyond the subregular hierarchy, exploring whether larger classes such as regular or context-free languages admit similar forms of separability or suitable relaxations. Cross-linguistic evaluations in phonology and morphology will be important for testing the universality of the approach. Finally, connecting our framework with earlier phonological models that employ linear classifiers, such as \citet{hayes2008maximum}’s Maximum Entropy model of phonotactics, may provide a deeper theoretical foundation for probabilistic constraint-based approaches and open new directions for integrating formal guarantees with gradient well-formedness.

This work has shown that all major subregular language classes are linearly separable when expressed through their deciding predicates. This result establishes finite observability across the subregular hierarchy and guarantees that these languages can be learned from finite samples using simple linear models. The theoretical contribution situates subregular grammars within a broader mathematical framework that connects formal language theory to geometric notions of separability.

Our experiments complemented this theory in two ways. Synthetic datasets demonstrated that separability is perfectly achievable in the absence of noise, with accuracy degrading in a predictable manner under increasing noise levels. Real-data experiments on English derivational morphology further confirmed the practical value of the approach: classifiers not only achieved strong accuracy but also highlighted linguistically meaningful constraints, such as restrictions on affix ordering and co-occurrence. These findings underscore the dual strengths of the framework: formal rigor and empirical interpretability.

Future work will extend these insights beyond the subregular hierarchy, exploring whether larger classes such as regular or context-free languages admit similar forms of separability or weaker relaxations thereof. Another direction is cross-linguistic evaluation in phonology and morphology, to determine whether the same subregular constraints generalize across languages. Finally, integrating subregular predicates into modern neural architectures may allow us to combine interpretability with the expressive power of deep learning, offering a path toward computational models of language that are both theoretically grounded and empirically effective.
\fi
\appendix
\section{Appendix: Proofs and Constructions}
\label{sec:appendix-proofs}
\subsection{Notation and Preliminaries}
We write $\mathcal{P}(\Sigma^*)$ for the power set of $\Sigma^*$, i.e., the set of all languages over $\Sigma$.
For a boolean condition $\varphi$, we use $\mathbf{1}[\varphi]$ to denote its indicator in $\{0,1\}$.
The boundary-extended string is $\tilde{x}=\#^{K}x\#^{K}$ with $K$ chosen per construction so that all local observations are well-defined.

\subsection{Proof of Lemma~\ref{lem:minterm} (Minterm Linearization)}
\label{app:proof-minterm}
Let $S\subseteq\{0,1\}^n$, and define weights $w\in\mathbb{R}^{2^n}$ and bias $b\in\mathbb{R}$ by
\begin{equation*}
  w_a=\begin{cases}1 & \text{if } a\in S,\\[2pt] 0 & \text{otherwise},\end{cases}
  \qquad b=-\tfrac12.
\end{equation*}
For any $x$, exactly one minterm indicator equals $1$, namely $\mu_{r(x)}(x)$, and all others are $0$.
Therefore $\sum_{a}w_a\mu_a(x)=\mathbf{1}[\,r(x)\in S\,]$.
Adding $b=-\tfrac12$ yields $+\tfrac12$ on positives and $-\tfrac12$ on negatives, establishing linear separability with margin at least $1/2$. \qed

\subsection{Proof of Theorem~\ref{thm:fo-to-linear}}
\label{app:proof-main}
By finite observability, there exists $S_L\subseteq\{0,1\}^n$ such that $x\in L$ iff $r(x)\in S_L$.
Apply Lemma~\ref{lem:minterm} to $S=S_L$ to obtain $(w,b)$ with
$\langle w,\Phi(x)\rangle+b>0$ iff $r(x)\in S_L$.
This proves the theorem, with margin at least $1/2$ inherited from Lemma~\ref{lem:minterm}. \qed

\subsection{Finite Observability of Standard Subregular Classes}
\label{app:finite-observability-subregular}

We sketch predicate sets witnessing finite observability for each standard family; detailed definitions match those in the main text.
We summarize these results in Table~\ref{tab:proof}.

\paragraph{$\mathsf{SL}_k$.}
Let $P=\{p_g: g\in(\Sigma\cup\{\#\})^k\}$ where $p_g(x)=\mathbf{1}[\,g\prec\tilde{x}\,]$.
A language $L$ is specified by a finite forbidden set $F$ of $k$-grams; membership is equivalent to $p_g(x)=0$ for all $g\in F$.
Thus $S_L=\{v\in\{0,1\}^{|P|}: v_g=0 \ \forall g\in F\}$.

\paragraph{$\mathsf{SP}_k$.}
Let $P=\{q_h: h\in(\Sigma\cup\{\#\})^k\}$ where $q_h(x)=\mathbf{1}[\,h\sqsubseteq \tilde{x}\,]$ (subsequence).
With forbidden subsequences $F$, set $S_L=\{v: v_h=0 \ \forall h\in F\}$.

\paragraph{$\mathsf{LT}_k$.}
Let $P$ contain all $k$-gram indicators together with prefix/suffix indicators of length $k{-}1$:
$p_g(x)=\mathbf{1}[\,g\prec\tilde{x}\,]$, $\pi_u(x)=\mathbf{1}[\,\mathrm{prefix}_{k-1}(\tilde{x})=u\,]$, $\sigma_v(x)=\mathbf{1}[\,\mathrm{suffix}_{k-1}(\tilde{x})=v\,]$.
By the classical characterization of LT$_k$, membership depends only on this predicate vector, hence some $S_L$ exists.

\paragraph{$\mathsf{PT}_m$.}
Let $P=\{r_u: u\in\bigcup_{\ell=1}^{m}(\Sigma\cup\{\#\})^\ell\}$ with $r_u(x)=\mathbf{1}[\,u\sqsubseteq\tilde{x}\,]$.
By Simon’s theorem \citep{simon1975}, membership depends only on the set of subsequences of length $\le m$, hence some $S_L$ exists.

\paragraph{$\mathsf{LTT}_{k,\boldsymbol\tau}$.}
Let $P=\{c_{g,t}: |g|\le k,\ t\in\{1,\dots,\tau_g\}\}$ with $c_{g,t}(x)=\mathbf{1}[\,\#\{g\text{ in }\tilde{x}\}\ge t\,]$, optionally augmented with thresholded prefix/suffix indicators of length $k{-}1$.
Counts saturate at $\tau_g$, so $P$ is finite and determines membership.

\paragraph{$\mathsf{TSL}_k$.}
Let $\mathcal{T}=\{T\subseteq\Sigma\}$ be the (finite) set of tiers.
For each $T\in\mathcal{T}$ and $g\in(T\cup\{\#\})^k$, include $s_{T,g}(x)=\mathbf{1}[\,g\prec \pi_T(\tilde{x})\,]$.
If $L$ is specified by $(T,F_T)$ (tier $T$ and forbidden tier $k$-grams), then membership is equivalent to $s_{T,g}(x)=0$ for all $g\in F_T$, i.e., $(s_{T,g}(x))\in S_L$ for a suitable $S_L$.

\paragraph{Unions of parameterized subclasses.}
Families like $\mathsf{PT}=\bigcup_m \mathsf{PT}_m$ or $\mathsf{SL}=\bigcup_k \mathsf{SL}_k$ are directed unions of finitely observable subclasses.
Unless parameters are bounded, no single finite predicate set suffices for the whole union; this does not affect applications where the parameters are fixed by design.

\if0
\section{Appendix: ROC and PR Analyses}
\begin{figure*}[t]
\centering
\begin{tabular}{ll}
\includegraphics[scale=0.41]{./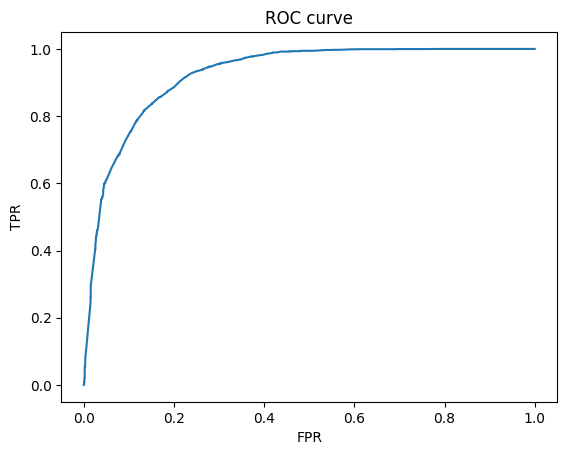} &
\includegraphics[scale=0.41]{./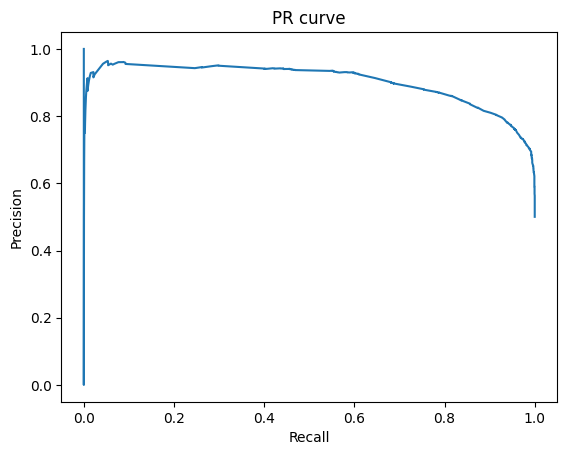} 
\end{tabular}
\caption{ROC and PR curves for the morphological classification task, demonstrating near-perfect separability of well-formed and ill-formed affix sequences.}
\label{fig:roc}
\end{figure*}
To complement the main results reported in the Experiments section, we include here the receiver operating characteristic (ROC) and precision–recall (PR) analyses for the morphological classification task. These plots provide additional evidence that the classification problems derived from subregular features are highly separable using standard linear models.

The ROC curve (Figure~\ref{fig:roc}~(A)) shows the trade-off between true positive rate and false positive rate across all classification thresholds. The curve approaches the upper-left corner, and the area under the curve (AUC) is consistently high, indicating that well-formed and ill-formed affix sequences can be separated with minimal error.

The PR curve (Figure~\ref{fig:roc}~(B)) provides a complementary view that is more informative under label imbalance. The precision remains high across a wide range of recall values, confirming that most predicted well-formed sequences are indeed correct even when recall is maximized.

Together, these analyses corroborate the main conclusion that the morphological task is nearly linearly separable when represented with subregular predicates. While they are not essential to the central claims of the paper, they strengthen the empirical case for the robustness and reliability of the proposed framework.
\fi

\bibliography{references2}
\bibliographystyle{acl_natbib}

\end{document}